\theoremstyle{plain}
\newtheorem{theorem}{Theorem}[section]
\theoremstyle{definition}
\theoremstyle{remark}
\icmltitlerunning{Path-Gradient Estimators for Continuous Normalizing Flows}
\begin{document}

\twocolumn[
\icmltitle{Path-Gradient Estimators for Continuous Normalizing Flows}

% It is OKAY to include author information, even for blind
% submissions: the style file will automatically remove it for you
% unless you've provided the [accepted] option to the icml2022
% package.

% List of affiliations: The first argument should be a (short)
% identifier you will use later to specify author affiliations
% Academic affiliations should list Department, University, City, Region, Country
% Industry affiliations should list Company, City, Region, Country

% You can specify symbols, otherwise they are numbered in order.
% Ideally, you should not use this facility. Affiliations will be numbered
% in order of appearance and this is the preferred way.

\begin{icmlauthorlist}
\icmlauthor{Lorenz Vaitl}{tu}
\icmlauthor{Kim A. Nicoli}{tu,bif}
\icmlauthor{Shinichi Nakajima}{tu,bif,riken}
\icmlauthor{Pan Kessel}{tu,bif}
\end{icmlauthorlist}

\icmlaffiliation{tu}{Machine Learning Group, Department of Electrical Engineering \& Computer Science, Technische Universit\"at Berlin, Germany}
\icmlaffiliation{bif}{BIFOLD - Berlin Institute for the Foundations of Learning and Data, Technische Universit\"at Berlin, Berlin, Germany}
\icmlaffiliation{riken}{RIKEN Center for AIP, 103-0027 Tokyo, Chuo City, Japan}

\icmlcorrespondingauthor{Pan Kessel}{pan.kessel@tu-berlin.de}

% You may provide any keywords that you
% find helpful for describing your paper; these are used to populate
% the "keywords" metadata in the PDF but will not be shown in the document
\icmlkeywords{Normalizing Flows, Continuous Normalizing Flows, Path Gradient Estimator, Sticking the landing, Variational Autoencoder, Variational Inference, Kullback-Leibler Divergence}

\vskip 0.3in
]

% this must go after the closing bracket ] following \twocolumn[ ...

% This command actually creates the footnote in the first column
% listing the affiliations and the copyright notice.
% The command takes one argument, which is text to display at the start of the footnote.
% The \icmlEqualContribution command is standard text for equal contribution.
% Remove it (just {}) if you do not need this facility.

\printAffiliationsAndNotice{}  % leave blank if no need to mention equal contribution
%\printAffiliationsAndNotice{\icmlEqualContribution} % otherwise use the standard text.

\begin{abstract}
Recent work has established a path-gradient estimator for simple variational Gaussian distributions and has argued that the path-gradient is particularly beneficial in the regime in which the variational distribution approaches the exact target distribution. In many applications, this regime can however not be reached by a simple Gaussian variational distribution. In this work, we overcome this crucial limitation by proposing a path-gradient estimator for the considerably more expressive variational family of continuous normalizing flows. We outline an efficient algorithm to calculate this estimator and establish its superior performance empirically. 
\end{abstract}

\section{Introduction}
Variational Inference is increasingly applied to very high-dimensional systems in various application domains. For example, recent developments in Variational Autoencoders (VAEs) \citep{kingma2013auto, rezende2014stochastic, kingma2016improved, vdberg2018sylvester, chen2018neural} have led to models that can generate high-resolution natural images. Similarly promising examples are applications in computer graphics \citep{muller2019neural}, theoretical physics \citep{wu2019solving, nicoli2020asymptotically, albergo2021introduction, nicoli2021estimation} and computational chemistry \citep{noe2019boltzmann}.

 A recurrent theme of most of these examples is that a highly expressive normalizing flow model is used as the variational density which can be trained by maximizing the evidence lower bound (ELBO) with the reparameterization trick \citep{kingma2013auto, rezende2014stochastic}. These models are considerably more expressive than traditional models, such as (mixtures of) Gaussians, and thereby enable the application of Variational Inference to these highly complex problems. 
 
 A promising subclass of these models are continuous normalizing flows \citep{chen2018neural} because they allow for a free-form Jacobian \citep{grathwohl2018scalable} and are particularly suited for incorporating known symmetries of the target density \citep{kohler2020equivariant} as inductive bias in their architecture. The latter point is of crucial importance in many applications in the natural sciences, (see e.g. \citet{kohler2020equivariant, kanwar2020equivariant, haan2021scaling}).

In a seminal paper, \citet{roeder2017sticking} proposed a promising alternative training method for Variational Inference. Specifically, they derived a new unbiased estimator for the gradient of the ELBO with respect to the variational parameters. Unlike the standard estimator, their proposed estimator is not based on the total derivative but rather on the path gradient, i.e. it only takes into account the implicit dependency on the variational parameters through reparameterized sampling but is insensitive to any explicit parameter dependency. From theoretical arguments, which we will discuss in Section~\ref{sec:vi}, it may be expected that this estimator has lower variance than the standard estimator. In their experiments, \citet{roeder2017sticking} demonstrated that, for comparatively simple variational models, this estimator indeed leads to faster convergences and better approximation results.

Unfortunately, their method cannot be efficiently implemented for normalizing flows for reasons that we will describe in detail in Section~\ref{sec:pathgradcnf}. As a result, the path-gradient estimator cannot efficiently be applied to the modern normalizing-flow-based applications of Variational Inference discussed above.

The main contribution of this work is to propose a method to estimate the path-gradient for continuous normalizing flow models. Our method has only slightly larger runtime and the same memory costs per iteration as the standard total derivative training. This mild increase in computational cost per iteration is however more than compensated by faster convergence often leading to a substantial speed-up in training time and better overall training results.

We demonstrate for two application domains, i.e. VAEs and lattice field theories in theoretical physics, that a simple replacement of the standard total gradient by the path-gradient estimator improves the performance across different architectures of continuous normalizing flows, datasets as well as for fixed and adaptive step-size ODE solvers.

\section{Related Work}
Path gradients were introduced by \citet{roeder2017sticking}. In this reference, the authors propose an algorithm to calculate path-gradients for certain simple variational families and demonstrate its superior performance on various VAE tasks. 
In \citet{tucker2018doubly}, the authors extended the results of \citet{roeder2017sticking} to other VAE losses, such as Importance Weighted Autoencoder \citep{burda2015importance}, Reweighted Wake Sleep \citep{bornschein2014reweighted}, and Jackknife Variational Autoencoder \citep{nowozin2018debiasing}. Later work \citep{finke2019importanceweighted, geffner2020difficulty, geffner2021empirical, bauer2021generalized} extended these results to further VAE loss functions, for example based on $\alpha$-divergences, and clarified theoretical aspects of the original references.

We extend on these reference by proposing a method to estimate the path-gradient which can efficiently be implemented for Normalizing Flows. To the best of our knowledge, \citet{agrawal2020advances} is the only reference studying path-wise gradient estimators of normalizing flows as part of a broader ablation study for comparatively simple models from the STAN library. In comparison to our method, their proposal has twice the memory and substantially larger runtime costs. This severely restricts their applicability in more complex problem settings. We refer to Section~\ref{sec:pathgradcnf} for a more detailed discussion.

\section{Variational Inference with Path-Gradients}\label{sec:vi}
In Variational Inference, we aim to approximate a target density $p$ by a variational density $q_\theta$ with parameters $\theta$. We denote the sampling space of these densities by $\mathcal{X}$. Typically, the target density $p$ is only known in its unnormalized form $\hat{p}$ with 
\begin{align*}
    p(x) = \frac{1}{Z} \hat{p}(x) \,,
\end{align*}
while the partition function $Z$ is computationally intractable. 

This specific type of Variational Inference arises in many application domains such as Variational Autoencoders (VAEs) \citep{kingma2013auto, rezende2014stochastic}, Boltzmann generators in Quantum Chemistry \citep{noe2019boltzmann, kohler2020equivariant}, as well as Generalized Neural Samplers in Statistical Physics \citep{wu2019solving, nicoli2020asymptotically} and Lattice Field Theories \citep{kanwar2020equivariant, albergo2021introduction, nicoli2021estimation, haan2021scaling}.

In the following, we will assume that the variational density $q_\theta(x)$ is obtained by reparameterization $g_\theta: \mathcal{Z} \to \mathcal{X}$ of a base density $q_Z$ with sampling space $\mathcal{Z}$, i.e. any sample $x \in \mathcal{X}$ can be written as
\begin{align*}
    x = g_\theta(z)
\end{align*}
for some sample $z \in \mathcal{Z}$ from the base density $q_Z$. An example is the variational family of Gaussians $\mathcal{N}(\mu, \sigma^2)$ parameterized by the mean $\mu$ and the variance $\sigma^2$ with base density $q_Z = \mathcal{N}(0,1)$ and 
\begin{align*}
    g_{\theta} (z) = \sigma z + \mu \,, && \textrm{with} && \theta \in \{\mu, \sigma^2\} \,.
\end{align*}
Another example are normalizing flows for which $g_\theta$ is given by an invertible neural network. In particular, reparameterized densities allow for the reparameterization trick
\begin{align}
    \mathbb{E}_{x \sim q_\theta} \left[ f(x) \right] &= \mathbb{E}_{z \sim q_z} \left[ f(g_\theta (z) ) \right] \, \label{eq:reparamTrick}
\end{align}
where $f:\mathcal{X} \to \mathbb{R}$ is an arbitrary function. 

The most widely used optimization criterion in Variational Inference is obtained by minimization of the reverse Kullback--Leibler divergence
$\textrm{KL}(q_\theta, p) = \mathbb{E}_{x \sim q_\theta} \left[ \ln \frac{q_\theta(x)}{p(x)} \right]$. Crucially, its derivative  with respect to the variational parameters does not depend on the partition function $Z$, i.e.
\begin{align*}
    \frac{d}{d\theta} \textrm{KL}(q_\theta, p)  
    &= \frac{d}{d\theta} \mathcal{F}(\theta)  \,,
\end{align*}
where we introduced
\begin{align*}
    \mathcal{F}(\theta) &= \mathbb{E}_{z \sim q_z} \left[ \ln q_\theta(g_\theta(z)) + E(g_\theta(z)) \right] \\
    & = \mathbb{E}_{z \sim q_z} \left[f(\theta, g_\theta(z)) \right] \,,
\end{align*}
with energy $E(x)=-\ln \hat{p}(x)$ and $f(\theta, g_\theta(z))=\ln q_\theta(g_\theta(z)) + E(g_\theta(z))$. One commonly refers to the quantity $\mathcal{F}$ as the variational free energy and to its negative $-\mathcal{F}$ as the evidence lower bound (ELBO). In the physics community, the former terminology is preferred while the machine learning literature tends to use the latter.

Using the reparameterization trick \eqref{eq:reparamTrick}, the total derivative of the reverse KL divergence can be written as
\begin{align}
    \tfrac{d}{d \theta} \textrm{KL}(q, p) &= \tfrac{d}{d \theta} \mathcal{F}(\theta) = \mathbb{E}_{z \sim q_Z} [\tfrac{d}{d \theta} f(\theta, g_\theta(z)) ] \,.     \label{eq:totalDer}
\end{align}
The total derivative of the function $f$ with respect to the variational parameters can be decomposed as follows
\begin{align*}
    \frac{d}{d\theta} f(g_\theta(z),\theta) = \blacktriangledown_\theta f(g_\theta(z),\theta) + \left. \frac{\partial}{\partial \theta} f(x, \theta) \right|_{x=g_\theta(z)} \,, 
\end{align*}
where we have introduced the \emph{path-gradient}
\begin{align*}
   \blacktriangledown_\theta f(g_\theta(z), \theta) = \frac{\partial f(g_\theta(z), \theta)}{\partial g_{\theta}(z)}^\intercal \frac{\partial g_{\theta}(z)}{\partial \theta} \,,
\end{align*}
i.e. the path-gradient only takes into account the implicit dependency on $\theta$ through the reparameterization $g_{\theta}$ and is insensitive to any explicit dependency.
The total derivative of the reverse KL divergences therefore splits in two terms
\begin{align*}
     \tfrac{d}{d \theta} &\textrm{KL}(q, p) \\ &= \mathbb{E}_{z \sim q_Z} [\blacktriangledown_\theta f(g_\theta(z),\theta) ] + \mathbb{E}_{z \sim q_Z} [ \left. \frac{\partial}{\partial \theta} f(x, \theta) \right|_{x=g_\theta(z)} ] \,.
\end{align*}
The latter is known as the score term and vanishes because the reparameterization trick \eqref{eq:reparamTrick} implies that
\begin{align*}
    &\mathbb{E}_{z \sim q_Z} [ \left. \frac{\partial}{\partial \theta} f(x, \theta) \right|_{x=g_\theta(z)} ] = \mathbb{E}_{z \sim q_Z} [ \left. \frac{\partial}{\partial \theta} \ln q_\theta(x) \right|_{x=g_\theta(z)} ] \\ 
    &= \mathbb{E}_{x \sim q_\theta} [ \frac{\partial}{\partial \theta} \ln q_\theta(x) ] = \frac{\partial}{\partial \theta} \int \textrm{d}x \, q_\theta(x) = 0  \,.
\end{align*}

The total derivative of the reverse KL divergence \eqref{eq:totalDer} is then estimated by Monte-Carlo
\begin{align}
    \frac{d}{d\theta} \textrm{KL}(q_\theta, p)  
    \approx \mathcal{G}_{\textrm{total}} = \mathcal{G}_{\textrm{score}} + \mathcal{G}_\textrm{path} 
\end{align}
where we have defined
\begin{align}
    \mathcal{G}_\textrm{path}  &= \frac{1}{N} \sum_{i=1}^N \blacktriangledown_\theta \left( \ln q_\theta(g_\theta(z_i)) + E(g_\theta(z_i)) \right) \,, \label{eq:pathgradest}\\
    \mathcal{G}_\textrm{score} &= \frac{1}{N} \sum_{i=1}^N \frac{\partial}{\partial \theta} \ln q_\theta(g_\theta(z_i)) \,,
\end{align}
with $z_i \sim q_Z$. We will refer to $\mathcal{G}_\textrm{path}$ as the path-gradient estimator and to $\mathcal{G}_\textrm{score}$ as the score estimator.

Both the total estimator $\mathcal{G}_{\textrm{total}}$ and the path-gradient estimator $\mathcal{G}_{\textrm{path}}$ are unbiased estimators of the gradient of the reverse KL divergence \eqref{eq:totalDer}.
 However, the estimators $\mathcal{G}_{\textrm{path}}$ and $\mathcal{G}_{\textrm{total}}$ have generically different variances because the variance of the score $\mathcal{G}_{\textrm{score}}$ is given by the Fisher information $\mathcal{I}(\theta)$ of the variational density $q_\theta$, i.e.
\begin{align*}
\mathrm{Cov}_{z_1, \dots, z_N \sim q_Z} \left[ \mathcal{G}_{\textrm{score}} \right]= \mathcal{I}(\theta) \,.
\end{align*}
We refer to the Supplement~\ref{app:dg} for a more detailed discussion. 

This difference in variance of the two estimators is particularly illustrative for a variational density $q_{\theta^*}$ which perfectly approximates the target $p$, i.e.
\begin{align}
    \forall x \in \mathcal{X}: && q_{\theta^*}(x) = p(x) \,. \label{eq:allequal}
\end{align}
For this case, the path-gradient estimator $\mathcal{G}_\textrm{path}$ vanishes identically because
\begin{align*}
    \mathcal{G}_\textrm{path} &= \frac{1}{N} \sum_{i=1}^N \left. \blacktriangledown_\theta \ln \frac{q_\theta(g_{\theta}(z_i))}{p(g_{\theta}(z_i))} \right|_{\theta^*} \\
    &= \frac{1}{N} \sum_{i=1}^N \underbrace{\frac{\partial}{\partial x_i} \ln  \frac{q_{\theta^*}(x_i)}{p(x_i)}}_{=0} \left. \, \frac{\partial g_\theta}{\partial \theta} \right|_{\theta^*} = 0 \,,
\end{align*}
where we have used \eqref{eq:allequal} in the last step.
As a result, the total gradient estimator $\mathcal{G}_{\textrm{total}}$ has non-vanishing variance $\mathcal{I}(\theta^*)$ even if the target $p$ is perfectly modelled by the variational density $q_{\theta^*}$. 
The path-gradient estimator $\mathcal{G}_{\textrm{path}}$ on the other hand does not have this problem. 

By continuity, one may therefore expect that the path-gradient estimator $\mathcal{G}_{\textrm{path}}$ has lower variance than the total gradient estimator $\mathcal{G}_{\textrm{total}}$ if the variational density $q_\theta$ is close to the target density $p$, i.e. in the final phase of training.

\begin{figure}[ht]
  \vskip 0.2in
  \begin{center}
  \centerline{\includegraphics[width=\columnwidth]{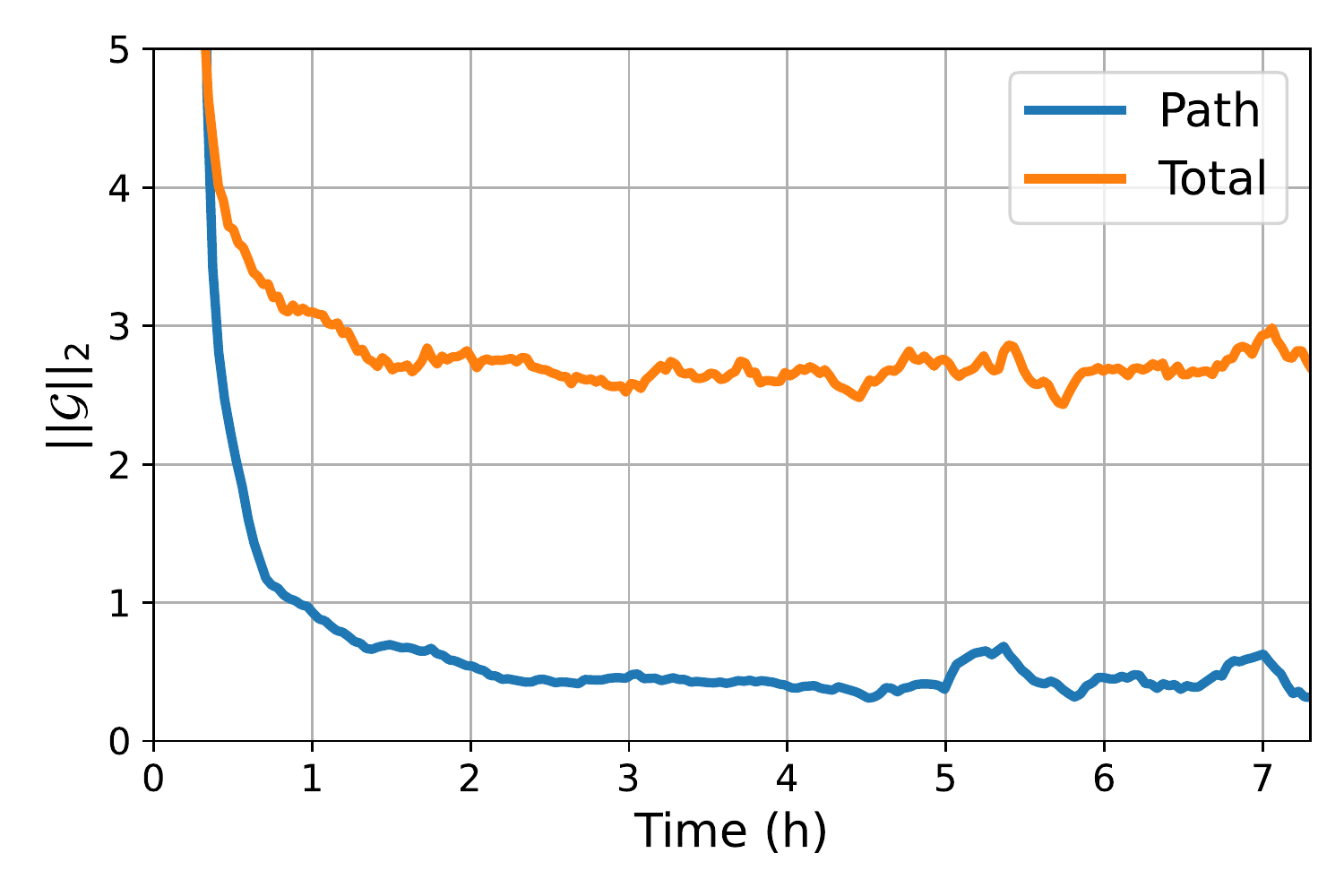}}
\caption{Norm of gradient estimators averaged over three runs for the $\phi^4$ lattice field theory with a rolling sum over 10 epochs for a lattice size of $L=12$.}
    \label{fig:gradnorm}
  \end{center}
  \vskip -0.2in
\end{figure}

\subsection{Path-Gradients for Simple Densities}\label{sec:pathgradsimple}
For the path-gradient estimator \eqref{eq:pathgradest}, we need to calculate
\begin{align*}
\blacktriangledown_\theta  \ln q_\theta(g_\theta(z)) + \blacktriangledown_\theta E(g_\theta(z)) \,.
\end{align*}
The second term can trivially be obtained by automatic differentiation because the total derivative
leads to the same result as the path gradient, i.e. $\frac{d}{d\theta} E(g_\theta(z)) = \blacktriangledown_\theta E(g_\theta(z))$. The first term is however non-trivial to calculate.

\citet{roeder2017sticking} proposed an algorithm for estimating path-gradients which proceeds in two steps: 
\begin{enumerate}
    \item Sample from $q_\theta$ by
    \begin{align*}
        x = g_\theta(z) \,.
    \end{align*}
    One can differentiate through this operation due to the reparameterization property of the density.
    \item Evaluate the probability density of the sample by formally using different parameters $\theta'$. The path-gradient is obtained by differentiating the result with respect to the original parameters $\theta$
    \begin{align*}
        \blacktriangledown_\theta \ln q(x, \theta) = \frac{d}{d \theta} \ln q(g_\theta(z), \theta') \,. 
    \end{align*}
\end{enumerate}
This approach can be efficiently implemented if the sampling process can be effectively disentangled from the evaluation of the log density, as is the case for (mixtures of) Gaussians and for other density known in closed-form. 

For normalizing flows, this is however not the case: the density contains the determinant of the Jacobian $|\frac{d g_\theta}{dz}|$ which is calculated during the sampling process. To the best of our knowledge, the only reference using path-gradients for invertible normalizing flow is \citet{agrawal2020advances}. In this reference, the authors calculate the path-gradient for a normalizing flow by considering two separate copies of the model for sampling and density evaluation. This doubles the runtime as well as the memory footprint and thereby reduces the possible mini-batch sizes in training. Large mini-batch sizes are however of central importance for successful training in many relevant application domains (see e.g. \citet{del2021machine,  kanwar2021machine}). 

For the simple class of variational densities discussed above, \citet{roeder2017sticking, tucker2018doubly, finke2019importanceweighted, geffner2020difficulty} have convincingly established that path-gradients lead to better optimization behaviour in practice.

\section{Path-Gradients for Continuous Normalizing Flows}\label{sec:pathgradcnf}
In the following, we will introduce an efficient algorithm to calculate path-gradients for continuous normalizing flows. 

Continuous normalizing flows \citep{chen2018neural} transform a sample $z_0 \sim q_Z$ from a base-density $q_Z$ using 
\begin{align}
    x \equiv z_T &= g_\theta(z_0) \nonumber \\
    &= z_0 + \int_0^T \, \textrm{d}t \, f_\theta(z_t, t)\,. \label{eq:cnfoutput}
\end{align}
Under the mild assumptions of the Picard–Lindelöf theorem, the map $g_\theta$ is bijective. In practice, the map $g_\theta$ is implemented using a Neural Ordinary Differential Equation (NODE) \citep{chen2018neural} of the form
\begin{align}
    \frac{d z_t}{d t} &= f_\theta(z_t, t) \label{eq:node}\,,
\end{align}
where $f_\theta$ is a (not necessarily invertible) neural network with parameters $\theta$ and $z_t$ is the intermediate state at time $t$. The determinant of the Jacobian of the reparameterization $g_\theta$ is given by
\begin{align*}
 \ln \left| \det 	\frac{ d g_\theta}{d z_0}  \right| = \int_0^T \textrm{tr} \left(\frac{d f_\theta(z_t, t)}{d z_t} \right) \textrm{d}t \,.
\end{align*}
The logarithm of the variational density is thus given by
\begin{align*}
    \ln q_\theta&(x) = \ln q_Z (g_\theta^{-1}(x))  -  \ln\left| \det  	\frac{ d g_\theta}{d z_0} \right|_{z_0 = g^{-1}_\theta(x)} \\
        & = \ln q_Z (z_0) - \int_0^T \textrm{tr} \left( \frac{\partial f_\theta(z_t, t)}{\partial z_t} \right) \textrm{d}t \,.
\end{align*}
In practice, the trace is often estimated using the Hutchinson trace estimator \citep{grathwohl2018scalable}.

\begin{figure*}
\vskip 0.2in
\begin{center}
\includegraphics[width=\textwidth]{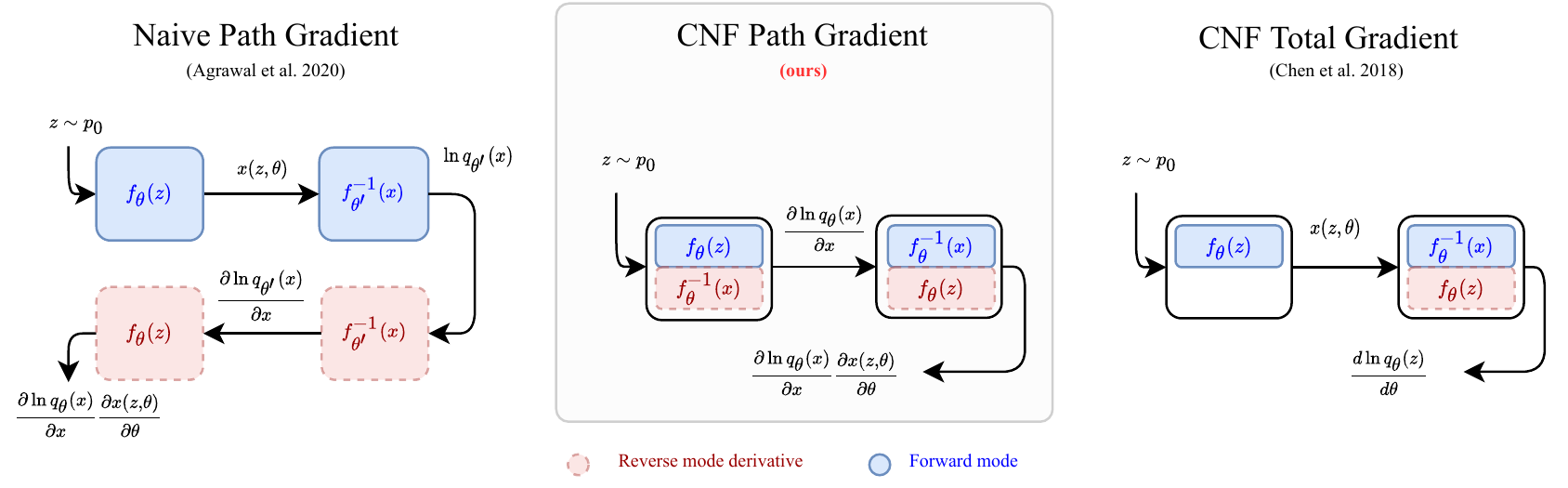}
\caption{\textbf{Left:}  method of \citet{agrawal2020advances}. Top row: two cloned models $q_\theta$ and $q_{\theta'}$ are used for sampling and density evaluation. This doubles the memory requirements. Lower row: reverse-mode differentiation through these two cloned models. Each NODE backward pass has the cost of two forward passes. Total cost of \cite{agrawal2020advances}: about six forward passes; double memory due to the cloning of the models. \textbf{Middle:} our method. Forward pass to calculate $x=z_T$ and the derivative $\tfrac{\partial \ln q_\theta(z_T)}{\partial z_T}$ at cost of about two forward passes and constant memory. By reverse-mode differentiation, the vector Jacobian product \eqref{eq:scoreVJP} is obtained at costs of about two forward passes and constant memory requirements. Total cost of our method: about four forward passes; half the memory requirements compared to \citet{agrawal2020advances}. \textbf{Right:} standard total gradient approach. Forward is followed by backward pass for which intermediate states are recomputed for memory efficiency. Total costs: about three forward passes. Our method has the same memory costs and $4/3$ times the computational costs but nevertheless converges more quickly.}
\label{fig:graph-pseudocode}
\end{center}
\vskip -0.2in
\end{figure*}

\subsection{Total Derivatives}\label{sec:totalder}
One major insight of \citet{chen2018neural} was the fact that we can use the adjoint state method~\citep{pontryagin1987mathematical} for computing the gradients of NODE model instead of using standard backpropagation through the ODE solver. Specifically, for any loss function of the form
\begin{align*}
    L(x) = L(z_T) = L\left(z_0 + \int_0^T \, \textrm{d}t \, f_\theta(z_t, t) \right) \,,
\end{align*}
its derivative with respect to the parameters $\theta$ can be obtained by
\begin{align*}
    \frac{dL}{d\theta} = - \int_0^T \textrm{d}t \, \left( \frac{\partial L}{ \partial z_t} \right)^\intercal \, \frac{\partial f_\theta(z_t, t)}{\partial \theta} \,,
\end{align*}
where the adjoint state $a_t = \tfrac{dL}{dz_t}$ is obtained by solving the terminal value problem
\begin{align}
    &\frac{d a_t}{dt} = - a_t^\intercal \frac{\partial f_\theta(z_t, t)}{\partial z_t}  \,, \nonumber \\
    &a_T = \frac{dL}{dz_T} 
    \,.  \label{eq:adjstate}
\end{align}
This approach allows to compute the gradients of the NODE with constant memory requirements at the price of additional computational costs. This proceeds in two steps:
\begin{enumerate}
    \item Calculation of $z_T$ by forward integration of \eqref{eq:node}. The intermediate states $z_t$ are \emph{not} stored in this forward pass to ensure constant memory costs.
    \item The adjoint terminal value problem \eqref{eq:adjstate} is then integrated starting from the terminal condition $a_T=\tfrac{dL}{dz_T}$ to obtain the adjoint states $a_t=\tfrac{dL}{dz_t}$. Since these adjoint states have the same dimensionality as the states $z_t$, this can be done for about the same computational cost as the forward pass. However, we also need to recompute the intermediate states $z_t$ by an additional reverse evolution of \eqref{eq:node} as they were not stored in the forward pass but enter the adjoint terminal value problem \eqref{eq:adjstate}. So in total, solving \eqref{eq:adjstate} has the computational cost of about two forward passes for constant memory requirements.
\end{enumerate}
 In summary, the total derivative $\frac{dL}{d\theta}$ of a loss $L$ can be calculated at constant memory with about the computational cost of three forward passes.

\subsection{Path Gradients}
We now discuss how to calculate the path-gradient for continuous normalizing flows. We recall from our discussion in Section~\ref{sec:pathgradsimple} that the only non-trivial term of the loss involves
\begin{align}
    \blacktriangledown_\theta  \ln q_\theta(g_\theta(z_0)) &= \frac{\partial \ln q_\theta(g_\theta(z_0))}{\partial g_\theta(z_0)}^\intercal \frac{\partial g_\theta(z_0)}{\partial \theta}  \nonumber \\
    &=\frac{\partial \ln q_\theta(z_T)}{\partial z_T}^\intercal \frac{\partial z_T}{\partial \theta} \label{eq:scoreVJP} \,,
\end{align}
where we have used the relation $z_T = g_\theta(z_0)$ as defined in \eqref{eq:cnfoutput}.

We prove in Appendix~\ref{app:CNF-adjoint} that the derivative of the log density $\frac{\partial \ln  q_\theta(z_T)}{\partial z_T} $ with respect to the final state $z_T$ can be calculated using the following ODE:

\begin{theorem} \label{th:score}
The derivative $\tfrac{\partial \ln q_\theta(z_T)}{\partial z_T}$ can be obtained by solving the initial value problem
\begin{align}
    \frac{d}{dt}   \frac{\partial \ln  q_\theta(z_t)}{\partial z_t} = 	&-\frac{\partial \ln q_\theta(z_t)^\intercal}{\partial z_t}  \frac{\partial f_\theta(z_t,t)}{\partial z_t} \nonumber \\ & \hspace{1em} - 	 \partial_{z_t} \textrm{tr}\left(\frac{\partial f_\theta(z_t, t)}{\partial z_t} \right) \,, \label{eq:pathgradode}
\end{align}
with initial condition
\begin{align*}
    \frac{\partial \ln q_\theta(z_0)}{\partial z_0} =  \frac{\partial \ln q_Z(z_0)}{\partial z_0} \,.
\end{align*}
\end{theorem}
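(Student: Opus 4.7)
My plan is to derive the claimed ODE by first obtaining the spatial PDE satisfied by the score $\nabla_x \ell_t(x)$, where $\ell_t(x) := \ln q_\theta^{(t)}(x)$ denotes the log pushforward density at time $t$, and then converting that PDE into an ODE along a solution trajectory $\dot z_t = f_\theta(z_t,t)$. The key mechanism is a cancellation of the Hessian of $\ell_t$ between two terms.

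First I would start from the instantaneous change-of-variables formula already recalled in the excerpt: for every trajectory $\dot z_t = f_\theta(z_t,t)$ one has $\tfrac{d}{dt}\ell_t(z_t) = -\textrm{tr}(\partial f_\theta/\partial z_t)$. Expanding the total derivative as $\tfrac{d}{dt}\ell_t(z_t) = \partial_t\ell_t(z_t) + \nabla_x\ell_t(z_t)^\intercal f_\theta(z_t,t)$ and noting that this must hold for every initial condition $z_0$ yields the transport PDE
\begin{align*}
\partial_t \ell_t(x) = - \nabla_x \ell_t(x)^\intercal f_\theta(x,t) - \textrm{tr}\!\left(\tfrac{\partial f_\theta(x,t)}{\partial x}\right).
\end{align*}
(This is the continuity equation for $q_\theta^{(t)}$, written in log form.)

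Next I take $\nabla_x$ of both sides. Setting $g_t(x) := \nabla_x \ell_t(x)$ and using $\nabla_x(g_t^\intercal f_\theta) = \nabla_x^2 \ell_t\, f_\theta + (\partial f_\theta/\partial x)^\intercal g_t$, where $\nabla_x^2\ell_t$ is the symmetric Hessian, I obtain
\begin{align*}
\partial_t g_t(x) = -\nabla_x^2 \ell_t(x)\, f_\theta(x,t) - \left(\tfrac{\partial f_\theta(x,t)}{\partial x}\right)^\intercal g_t(x) - \nabla_x \textrm{tr}\!\left(\tfrac{\partial f_\theta(x,t)}{\partial x}\right).
\end{align*}
I then compute the derivative of $g_t(z_t)$ along a solution trajectory,
\begin{align*}
\tfrac{d}{dt} g_t(z_t) = \partial_t g_t(z_t) + \nabla_x^2 \ell_t(z_t)\, f_\theta(z_t,t),
\end{align*}
and substitute the previous expression; the two Hessian terms cancel and what remains is exactly \eqref{eq:pathgradode}. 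The initial condition is immediate, since at $t=0$ the flow has not acted and therefore $\ell_0 \equiv \ln q_Z$, giving $\partial_{z_0}\ell_0(z_0) = \partial_{z_0}\ln q_Z(z_0)$.

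The main obstacle is essentially bookkeeping: matching the paper's row/column convention in $\partial \ln q_\theta(z_t)^\intercal/\partial z_t$, and justifying the interchange of $\partial_t$ with $\nabla_x$, which is legal under exactly the regularity of $f_\theta$ already needed for Picard–Lindelöf and for the instantaneous change-of-variables formula. The one substantive step is the Hessian cancellation, which is what makes the score ODE first-order in spatial derivatives and therefore tractable by the same kind of adjoint-style forward integration as the state itself; everything else is a routine derivative computation.
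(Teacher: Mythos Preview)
Your proof is correct and takes a different route from the paper's. You work via the continuity (Liouville) equation: write the transport PDE for $\ell_t(x)$, differentiate once in $x$ to obtain a PDE for the score $g_t=\nabla_x\ell_t$, and then evaluate along the characteristics $\dot z_t=f_\theta(z_t,t)$; the Hessian of $\ell_t$ coming from the convective term is exactly cancelled by the Hessian produced by the chain rule along the trajectory, and this cancellation is precisely the mechanism that keeps the score ODE first order in spatial derivatives. The paper instead uses a direct $\epsilon$-discretization in the spirit of the adjoint derivation of \citet{chen2018neural}: it relates $\alpha_t := \partial_{z_t}\ln q_\theta(z_t)$ to $\alpha_{t-\epsilon}$ via $\alpha_t=\alpha_{t-\epsilon}\,\tfrac{d z_{t-\epsilon}}{d z_t}+\epsilon\,\partial_{z_t}\partial_t\ln q_\theta(z_t)+O(\epsilon^2)$, expands $\tfrac{d z_{t-\epsilon}}{d z_t}=I-\epsilon\,\partial f_\theta/\partial z_t+O(\epsilon^2)$, and takes $\epsilon\to 0$; no Hessian ever appears because $\partial_t$ and $\nabla_x$ are never separated. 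Your PDE/characteristics argument makes the first-order nature of the result conceptually transparent and ties it to the score-equation viewpoint familiar from diffusion models, while the paper's discretization is more elementary and stays closer to the adjoint-state machinery used elsewhere in the paper. The row/column convention mismatch you flag is purely cosmetic (the paper treats $\alpha_t$ as a row vector), and the regularity needed to swap $\partial_t$ and $\nabla_x$ is indeed no more than what the instantaneous change-of-variables formula already requires.
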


We note that the derivative $\frac{\partial \ln  q_\theta(z_t)}{\partial z_t}$ is of the same dimension as the intermediate state $z_t$. Thus, solving its initial value problem of Theorem~\ref{th:score} has about the same computational cost as the forward pass which calculates $z_T$. As a result, the joint calculation of $\frac{\partial \ln  q_\theta(z_t)}{\partial z_t}$ and $z_T$ has about the cost of two forward passes. We then need to calculate the vector Jacobian product \eqref{eq:scoreVJP}. This can be done with reverse-mode differentiation using the adjoint state method discussed in Section~\ref{sec:totalder} for the costs of about two forward passes (one for the evolution of the adjoint state and the other to re-compute the intermediate states $z_t$). 

In summary, our proposed method to calculate the path-gradient for continuous normalizing flows has the computational cost of about four forward passes as compared to three for the total derivative. Crucially, it also has constant memory requirements.

\begin{figure*}
\vskip 0.2in
\begin{center}
\includegraphics[width=0.33 \textwidth]{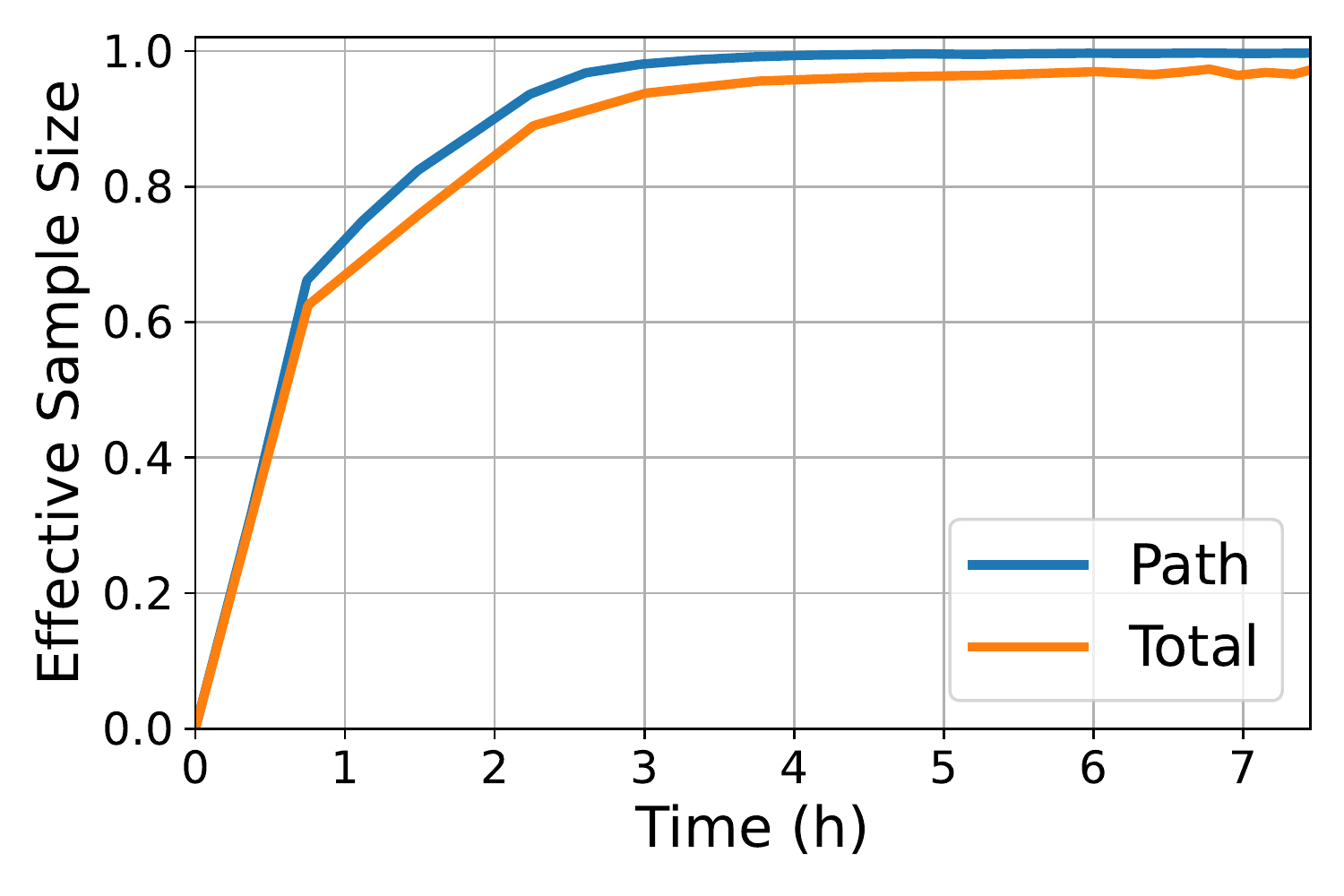}
\includegraphics[width=0.33 \textwidth]{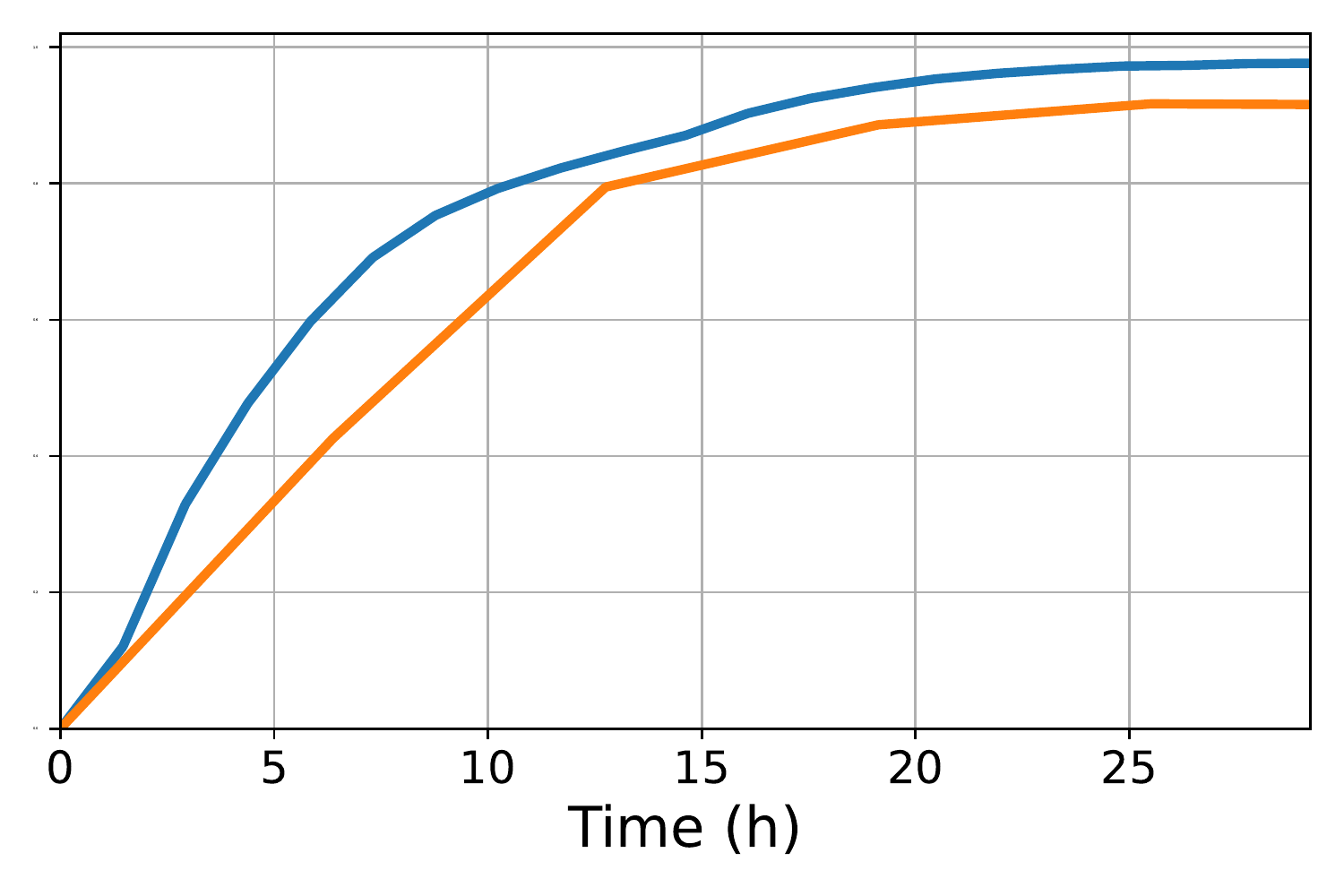} 
\includegraphics[width=0.33 \textwidth]{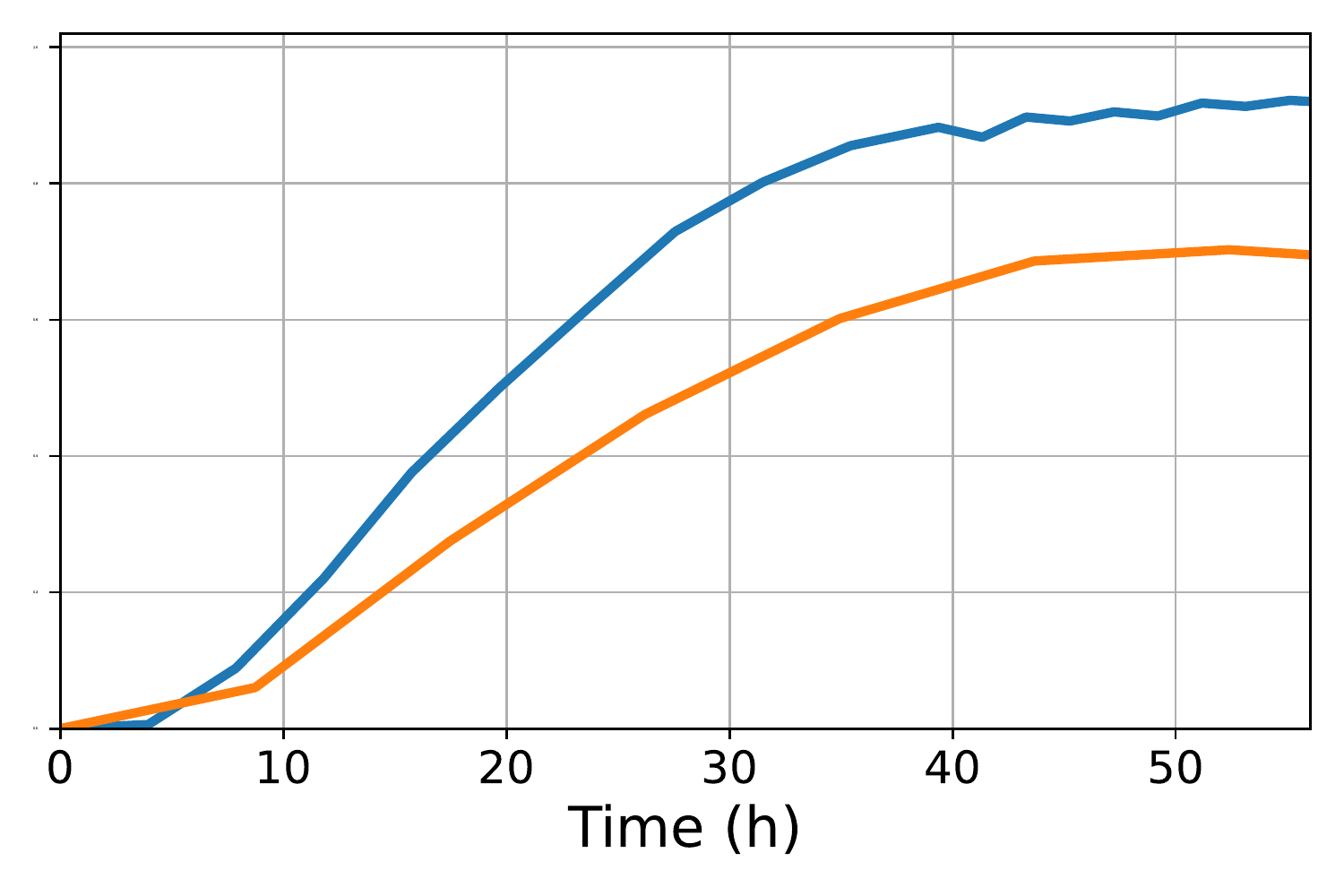}
\caption{Evolution of effective sampling size (ESS) during training; \textbf{Larger is better}. Training for $L$x$L$-lattices with $L=12$ (\textbf{left}) and $L=20$ (\textbf{middle}), as well as $L=32$ (\textbf{right)}. The path-gradient estimator leads to faster convergence and overall better ESS values. Training was performed on a single $A100$ GPU.}
\label{fig:phi4-training}
\end{center}
\vskip -0.2in
\end{figure*}

The following comments are in order:
\begin{itemize}
    \item The initial value problem \eqref{eq:pathgradode} of Theorem~\ref{th:score} is solved forward in time. As such the intermediate states $z_t$ do not need to be re-computed for this in order to satisfy the constant memory constraint. This is in contrast to the reverse-mode differentiation with the adjoint state method discussed in Section~\ref{sec:totalder}.
    \item Our estimate of the runtime ratio of about $4/3$ for the path-gradient compared to the total derivative does not take account synergies between the various ODEs. For example, the derivative $\frac{\partial f_\theta(z_t,t)}{\partial z_t}$ appears both in the adjoint terminal value problem \eqref{eq:adjstate} and in the initial value problem \eqref{eq:pathgradode} of Theorem~\ref{th:score}. This will result in an effectively smaller difference in runtime as we will demonstrate in our numerical experiments in Section~\ref{sec:runtime}. 
    \item In our numerical experiments, we will demonstrate that this runtime overhead of the path-gradient estimator will be more than compensated by better convergence properties of the path-gradient estimator, see Section~\ref{sec:experiments}. This may be expected due to the smaller variance of the path-gradient estimator, as discussed in Section~\ref{sec:vi}.
    \item Our method compares very favorably to the approach taken in \citet{agrawal2020advances} which has the cost of about six forward passes and twice the memory footprint (compared to both the standard total- and our path-gradient estimator). See Figure~\ref{fig:graph-pseudocode} for an illustration. 
\end{itemize}

\section{Numerical Experiments}\label{sec:experiments}
We consider two different application domains, i.e. Variational Autoencoders and normalizing flows for lattice field theories. We demonstrate that a simple replacement of the total derivative estimator by the path-gradient estimator improves the performance on both of these tasks. We also discuss the runtime costs per iteration and for the entire training process. We provide code to reproduce the experiments using VAEs\footnote{ \href{https://github.com/lenz3000/ffjord-path}{https://github.com/lenz3000/ffjord-path}}.

\subsection{Lattice Field Theory}
Recently, there has been substantial interest in applying normalizing flows to Lattice Field Theories which can be used to describe the strong interactions of elementary particles such as electrons and quarks. From the machine learning perspective, this application is exciting as incorporating the symmetries of the field theories in the model is of the utmost importance in this domain. Continuous normalizing flows are particularly suitable for ensuring such an inductive bias (see \citet{kohler2020equivariant}). 

We repeat the state-of-the-art experiments in \citet{haan2021scaling} for a two-dimensional scalar lattice field theory with a quartic interaction. 
\paragraph{$\phi^4$-theory:} this lattice field theory is described by a random vector $\phi$ with components $\phi_x$ where the index $x \in \Lambda$ takes values on an $L \times L$ square lattice $\Lambda$. The random vector $\phi$ has the density $p(\phi)=\tfrac{1}{Z} \exp(-S(\phi))$ with action
\begin{align*}
    S(\phi) = \sum_{x,y \in \Lambda} \phi_x \bigtriangleup_{xy} \phi_y + \sum_{x \in \Lambda} m^2 \, \phi_x^2 + \lambda \, \phi_x^4 \,,
\end{align*}
where $\bigtriangleup_{xy}$ is the Laplacian matrix on the lattice $\Lambda$ and we use periodic boundary conditions. The density $p$ has a $\mathbb{Z}_2$-symmetry, i.e. $p(\phi)=p(-\phi)$. This density is also invariant under the spatial symmetry group $G = C_L^2 \rtimes D_4 $ of the lattice $\Lambda$, i.e. the semi-direct product of the squared cyclic group $C_L$ and the Dihedral group $D_4$. We use the state-of-the-art model of \citet{haan2021scaling} which is manifestly invariant under these symmetries.  We also use the same choices for bare mass $m^2$ and coupling $\lambda$ as in \citet{haan2021scaling} as well as their three largest lattices.
\paragraph{Training:} the reverse KL divergence $\textrm{KL}(q_\theta, p)$ from the model $q_\theta$ to the target density $p$ is minimized using either the standard total derivative estimator $\mathcal{G}_{\textrm{total}}$ or the path-gradient estimator $\mathcal{G}_{\textrm{path}}$. Then the performance of the respectively trained models is compared.
Both the path-gradient and total gradient training use the same training hyperparameters as in \citet{haan2021scaling} except for a larger batch-size of 5000 and learning rate which we find to improve the performance of the baseline. Each model was trained on a single A100 GPU. We refer to the Supplement~\ref{app:exp-details} for more details on the training procedure.
\paragraph{Effective Sampling Size:} one can then estimate expectation values with respect to the \emph{target density} $p$, i.e. physical observables, using self-normalized importance sampling \citep{muller2019neural, noe2019boltzmann, nicoli2020asymptotically}:
\begin{align*}
    \mathbb{E}_{\phi \sim p} \left[ F(\phi) \right] \approx \hat{F} \equiv \sum_{i=1}^N \frac{\tilde{w}_i}{\sum_{j=1}^N \tilde{w}_j} \, F(\phi_i) \,, && \phi_i \sim p \,,
\end{align*}
where $\tilde{w}_i = \tfrac{\exp(-S(\phi_i))}{q(\phi_i)}$ and $F$ is an arbitrary function. For a variational density $q_\theta$ which perfectly approximates the target density $p$, the variance of the estimator $\hat{F}$ scales as $1/N$. Due to the approximation error of $q_\theta$, in practice a scaling with $1/N_{\textrm{eff}}$ is observed with $N_{\textrm{eff}} \le N$. The quality of the approximation of the target density $p$ by the variational density $q_\theta$ is therefore commonly measured in terms of the effective sampling size
\begin{align}
    \textrm{ESS} = \frac{N_{\textrm{eff}}}{N} \in [0, 1] \,,
\end{align}
where values close to the limits of 1 and 0 indicate perfect or very poor approximation of the target, respectively. 

In practice, the ESS can be estimated by
\begin{align*}
    \textrm{ESS} \approx \frac{(\frac{1}{N} \sum_{i=1}^N \tilde{w}_i)^2}{\frac{1}{N} \sum_{i=1}^N \tilde{w}_i^2} \,.
\end{align*}
We refer to \citet{nicoli2020asymptotically} for more details.
\paragraph{Results:} a simple replacement of the total derivative estimator $\mathcal{G}_{\textrm{total}}$ by the path-gradient estimator $\mathcal{G}_{\textrm{path}}$ leads to a significant faster learning process as demonstrated in Figure~\ref{fig:phi4-training}. We stress that this speed-up is in terms of runtime and not only number of iterations. Furthermore, Table~\ref{tab:phi4} demonstrates that the final effective sampling size of the models trained by path-gradients is consistently larger as compared to models trained by the total derivative estimator. This effect is more pronounced as the lattice size increases.

 Figure~\ref{fig:gradnorm} shows that the norm of the path-gradient estimator is significantly lower than the norm of the standard estimator, as expected from the discussion in Section~\ref{sec:vi}.

\begin{table}[t]
  \caption{Effective Sample Size estimated over 500k samples after 24k epochs for total derivative estimator $\mathcal{G}_{\textrm{total}}$ and 20k for path-gradient estimator $\mathcal{G}_{\textrm{path}}$; {\bf larger is better}.
  $^\dagger$ For $L=32$, we trained for 15.4k/13k epochs.}
  \label{tab:phi4}
\vskip 0.15in
\begin{center}
\begin{small}
\begin{sc}
    \begin{tabular}{@{}l|rr}%\toprule
  Lattice size & Path  & Total\\
  \toprule
  12x12  & $\textbf{99.66} \pm 0.07$ & $98.01 \pm 0.44 $ \\
  20x20  & $\textbf{97.65} \pm 0.14$ & $91.56 \pm 1.13$ \\
  32x32$^\dagger$  & $\textbf{91.81} \pm 1.32$ & $69.53 \pm 5.59$  \\
  %\bottomrule
  \end{tabular}
\end{sc}
\end{small}
\end{center}
\vskip -0.1in
\end{table}

\subsection{Variational Autoencoder}

We repeat the VAE experiments in \citet{grathwohl2018scalable} which train a VAE for four datasets using a FFJORD flow. 
We use the path-gradient $\mathcal{G}_{\textrm{path}}$ and the total derivative estimator $\mathcal{G}_{\textrm{total}}$ for training and compare their respective performances. 
In contrast to the lattice field theory application above, these experiments use early stopping and an adaptive step size ODE solver. We refer to the Supplement~\ref{app:exp-details} for more details. 

Table~\ref{tab:inference} summarizes the results for the ELBO. On all considered datasets, the path-gradient estimator outperforms the total gradient estimator. 
%This effect is again more pronounced as the dimensionality of the task increases. 

 \begin{table}[t]
 \caption{Negative ELBO on test data for VAE models using FFJORD flows; \textbf{lower is better}. In nats for all datasets except Frey Faces which is presented in bits per dimension. Mean/stdev are estimated over 3 runs. Baseline total derivative estimator results are as in \citet{grathwohl2018scalable}.}
  \label{tab:inference}
\vskip 0.15in
\begin{center}
\begin{small}
\begin{sc}
  \begin{tabular}{@{}l|rr}%\toprule
   Dataset  & Path & Total\\
  \toprule
  MNIST  & $\textbf{82.09} \pm .04$ & $82.82 \pm .01 $\\
  Omniglot  & $\textbf{96.61} \pm .17$ & $ 98.33 \pm .09 $\\
  Caltech Silhouettes  &  $\textbf{101.93} \pm .63$ & $ 104.03 \pm .43 $\\
  Frey Faces  & $\textbf{4.35} \pm .00$ & $4.39 \pm .01$\\
  %\bottomrule
  \end{tabular}
\end{sc}
\end{small}
\end{center}
\vskip -0.1in
\end{table}

 \subsection{Runtime Comparison} \label{sec:runtime}
 We empirically evaluate the runtime costs of our path-gradient estimator compared to the standard total gradient estimator on the two considered tasks.

 \paragraph{Lattice Field Theory:}
 we stress that we performed both the path-gradient and total gradient training for the same overall runtime to ensure fair comparison. As discussed in Section~\ref{sec:pathgradcnf}, we expect the path-gradient estimator to have slightly higher costs \emph{per iteration}. We find that this effect is more than compensated by better convergence properties of the path-gradient. 
 
 As expected, we furthermore find in Table~\ref{tab:phi4time} that the runtime overhead per iteration is between $14\%$ and $5\%$ depending on the lattice size and thereby substantially lower than the estimate of $33\%$ which neglected synergies between the various ODEs, as discussed in Section~\ref{sec:pathgradcnf}. 
 
 The difference in iteration costs decreases as the size of the lattice increases. This effect can be attributed to fact that the continuous normalizing flow architecture of \citet{haan2021scaling} has a quadratic scaling of model parameters with the lattice size. Even if we use a best-case scenario for the total gradient baseline, i.e. a model whose number of parameters is independent of the lattice size, this overhead is mild and stays constant with the problem size, see Figure~\ref{fig:runtime}. Furthermore, our method is significantly faster than the method to estimate the path-gradient proposed by \citet{agrawal2020advances} which furthermore has twice the memory costs.
 
 In summary, we find empirically that the overhead of our path-gradient estimation method per iteration is mild, decreasing as the dimension increases for the relevant architecture, and significantly outperforms the proposal by \citet{agrawal2020advances}. Crucially, the path-gradient leads to faster convergence and thus achieves better results using the same walltime.

\begin{figure}[t]
\vskip 0.2in
\begin{center}
\includegraphics[width=\columnwidth]{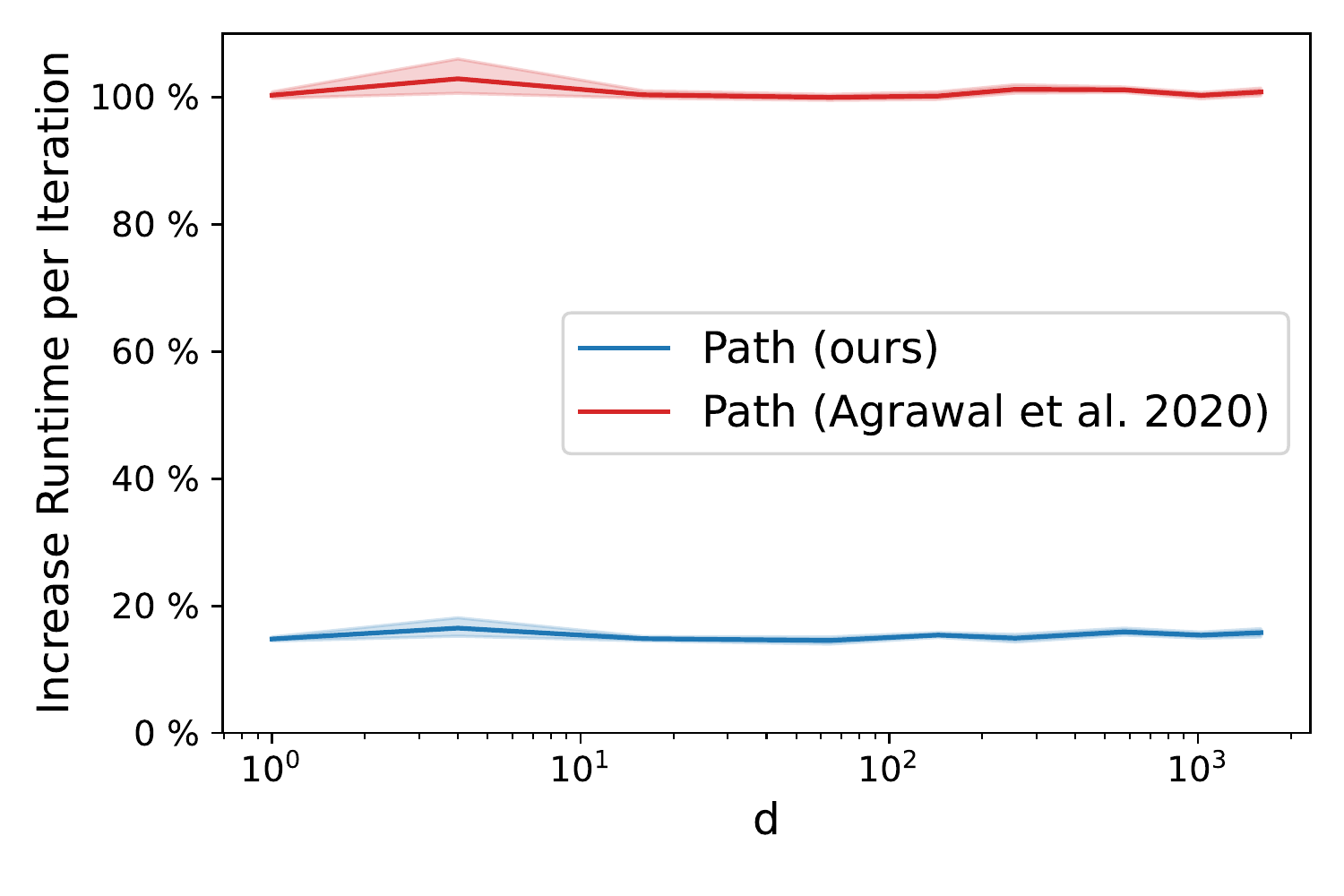}
\caption{Increase in runtime per iteration for computing the path gradient estimated by our method and the method by \citet{agrawal2020advances} as compared to the standard reparametrized gradient estimator measured on an A100 GPU.}
\label{fig:runtime}
\end{center}
\vskip -0.2in
\end{figure}

\paragraph{VAE:}
The VAE experiments use an adaptive step-size solver for evolving the ODEs. As such, the time per iteration cannot be meaningfully compared. The training also used early-stopping which necessarily implies that training has no fixed runtime. 

Nevertheless, we find that the path-gradient training has comparable runtime to the standard total gradient training (largest runtime of our method were $111\%$ for the Omniglot and $98\%$ for the MNIST dataset compared to standard estimator). 
Reassuringly, we did not see any noticeable effect of the path gradient estimator on the number of function evaluations (see Figure~\ref{fig:NFE}).
 
 \begin{figure}[t]
\vskip 0.2in
\begin{center}
\includegraphics[width=\columnwidth]{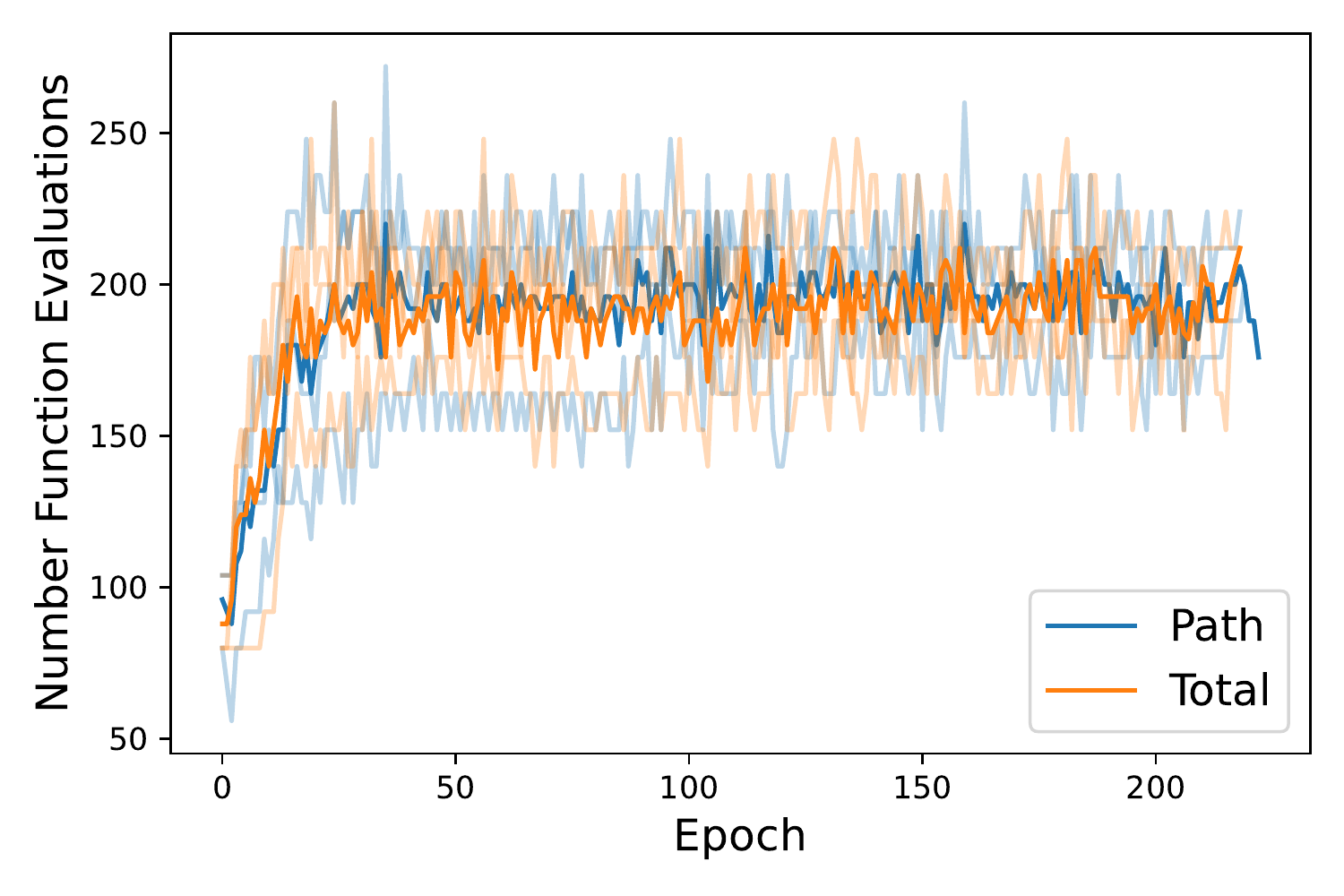}
\caption{Number of function evaluations for both gradient estimators on the Frey Faces dataset using the same hyperparameters. No trend is recognizable}
\label{fig:NFE}
\end{center}
\vskip -0.2in
\end{figure}

\begin{table}[t]
  \caption{Time increase \emph{per iteration} of the path-gradient over the standard total derivative estimator on an A100 GPU for the lattice field theory experiments of Table~\ref{tab:phi4}.}
  \label{tab:phi4time}
\vskip 0.15in
\begin{center}
\begin{small}
\begin{sc}
    \begin{tabular}{@{}l|r}%\toprule
  Lattice size  & Time per Iteration\\
  \toprule
  12x12 &  $+14 \% \pm 0 $\\
  20x20 &  $+ 12 \% \pm 1$\\
  32x32 & $ + 5\% \pm 1$ \\
  %\bottomrule
  \end{tabular}
\end{sc}
\end{small}
\end{center}
\vskip -0.1in
\end{table}
\section{Conclusion}
We have proposed a path-gradient estimator which can be efficiently implemented for continuous normalizing flows. 

For this, we derived an ODE \eqref{eq:pathgradode} which can be solved forwards in time to obtain the gradient of the variational log density with respect to the output of the flow. From this, we can then straightforwardly obtain the path-gradient.

This approach has same constant memory requirements and only slightly increased runtime per iteration as compared to the standard total gradient estimator. It also is significantly faster per iteration and has half the memory requirements compared to the method proposed by \citet{agrawal2020advances}. In particular, our method allows for training with sufficiently large mini-batches for modern Variational Inference tasks in the natural sciences, as demonstrated by the lattice field theory application considered in our experiments.

We show empirically that a straightforward plug-and-play replacement of the total gradient by the path-gradient estimator leads to a surprisingly pronounced increase in performance in both the VAE and lattice field theory tasks. As an example, we observe a larger than $30\%$ increase in ESS for highest dimensional lattice field theory task by switching from the standard total derivative to the path-gradient estimator.

A limitation of our work is that so far there is only a theoretical explanation for the lower variance of the path-gradient estimator in the final training phase, as was discussed in Section~\ref{sec:vi}. However, our experimental results suggest that the entire training process benefits from the path-gradient estimator. A theoretical understanding of this would be desirable but seems mathematically very challenging. 

In light of our results, we expect that the proposed path-gradient estimator will become a standard item in the toolbox for training continuous normalizing flows on modern Variational Inference tasks.

\section*{Acknowledgments}
We thank the anonymous reviewers for their valuable feedback.
K.A.N., S.N. and P.K. are supported by the German Ministry for Education and Research (BMBF) as BIFOLD - Berlin Institute for the Foundations of Learning and Data under grants 01IS18025A and  01IS18037A.

% Acknowledgements should only appear in the accepted version.
%\section*{Acknowledgements}

%\textbf{Do not} include acknowledgements in the initial version of
%the paper submitted for blind review.

%If a paper is accepted, the final camera-ready version can (and
%probably should) include acknowledgements. In this case, please
%place such acknowledgements in an unnumbered section at the
%end of the paper. Typically, this will include thanks to reviewers
%who gave useful comments, to colleagues who contributed to the ideas,
%and to funding agencies and corporate sponsors that provided financial
%support.

% In the unusual situation where you want a paper to appear in the
% references without citing it in the main text, use \nocite

\bibliography{main}
\bibliographystyle{icml2022}

%%%%%%%%%%%%%%%%%%%%%%%%%%%%%%%%%%%%%%%%%%%%%%%%%%%%%%%%%%%%%%%%%%%%%%%%%%%%%%%
%%%%%%%%%%%%%%%%%%%%%%%%%%%%%%%%%%%%%%%%%%%%%%%%%%%%%%%%%%%%%%%%%%%%%%%%%%%%%%%
% APPENDIX
%%%%%%%%%%%%%%%%%%%%%%%%%%%%%%%%%%%%%%%%%%%%%%%%%%%%%%%%%%%%%%%%%%%%%%%%%%%%%%%
%%%%%%%%%%%%%%%%%%%%%%%%%%%%%%%%%%%%%%%%%%%%%%%%%%%%%%%%%%%%%%%%%%%%%%%%%%%%%%%
\newpage
\appendix
\onecolumn
\section{Path-Gradients and Information Geometry}\label{app:dg}
The Fisher information is defined by
\begin{align*}
    \mathcal{I}(\theta)_{ij} = \mathbb{E}_{x \sim q_\theta} \left[ s(\theta, x)_i \, s(\theta, x)_j \right) ] \,,
\end{align*}
where the score function is defined by
\begin{align*}
    s(\theta, x)_i = \frac{\partial}{\partial \theta_i} \ln q_\theta(x) \,.
\end{align*}

Using
\begin{align*}
 \mathbb{E}_{z_1, \dots, z_N \sim q_Z} \left[ \mathcal{G}_{\textrm{score}} \right] = 0 \,,
\end{align*}
the covariance of the score term $\mathcal{G}_{\textrm{score}}$ is given by
\begin{align*}
\mathrm{Cov}_{z^{(1)}, \dots, z^{(N)} \sim q_Z} \left[ \mathcal{G}_{\textrm{score}} \right]_{ij}
= \mathbb{E}_{z^{(1)}, \dots, z^{(N)} \sim q_Z} \left[ \frac{1}{N} \sum_{l=1}^N \partial_{\theta_i} \ln q_\theta(x(z^{(l)})) \frac{1}{N} \, \sum_{k=1}^N \partial_{\theta_j} \ln q_\theta(x(z^{(k)})) \right]
\end{align*}
We now use the fact that the samples $z_i$ are independent and identically distributed, e.g. $\mathbb{E}_{z^{(i)}, z^{(j)} \sim q_Z} [\bullet] = \mathbb{E}_{z^{(i)} \sim q_Z} [\bullet] \mathbb{E}_{z^{(j)} \sim q_Z} [\bullet]$ for $i \neq j$, to arrive at the final result
\begin{align*}
\mathrm{Cov}_{z^{(1)}, \dots, z^{(N)} \sim q_Z} \left[ \mathcal{G}_{\textrm{score}} \right]_{ij}
&=  \frac{1}{N} \sum_{l=1}^N \mathbb{E}_{z^{(l)} \sim q_Z }\left[ \partial_{\theta_i} \ln q_\theta(x(z^{(l)})) \,  \partial_{\theta_j} \ln q_\theta(x(z^{(l)})) 
\right] \\
&=\mathbb{E}_{z \sim q_Z }\left[ \partial_{\theta_i} \ln q_\theta(x(z)) \,  \partial_{\theta_j} \ln q_\theta(x(z))
\right] \\
&=\mathbb{E}_{x \sim q_\theta }\left[ \partial_{\theta_i} \ln q_\theta(x) \,  \partial_{\theta_j} \ln q_\theta(x)
\right] \\
&= \mathbb{E}_{x \sim q_\theta} \left[ s(\theta, x)_i \, s(\theta, x)_j \right) ] \\
&= \mathcal{I}(\theta)_{ij} \,. 
\end{align*}

\section{Algorithms}
\begin{algorithm*}
% I never liked the latex algorithms package anyways....
\caption{\textbf{Forw-Aug}: Forward-mode derivative for path-wise gradient estimators for CNFs}
\label{algo1}
\begin{algorithmic}
\STATE {\bfseries Input:} dynamics parameters $\theta$, start time $0$, stop time $1$, initial state $z_0$, 
loss gradient $\nicefrac{\partial \ln q_0(z_0)}{\partial z_0}$
\STATE $s_0 = [z_0, \frac{\partial \ln q_0(z_0)}{\partial z_0}]$ \COMMENT{Define initial augmented state}
\STATE{\textbf{def }\textnormal{aug\_dynamics}}{$([z_t, \alpha(t), \cdot], t, \theta)$}: \COMMENT{Define dynamics on augmented state}

\STATE \hspace{2em} \textbf{return} $[f(z_t, t, \theta), -\alpha(t)  \frac{\partial f}{\partial z_t}  - \frac{\partial \textrm{tr} \left(\nicefrac{df}{d z_t}\right)}{\partial z_t}]$ \COMMENT{Compute vector-Jacobian products} 

\STATE $[z_1, \ln q_\theta(z_T), \frac{\partial \ln q_\theta(z_T)}{\partial z_T}] = \textnormal{ODESolve}(s_0, \textnormal{aug\_dynamics}, 0, T, \theta)$ \COMMENT{Solve reverse-time ODE} 
\STATE {\bfseries Return:} $z_T, \ln q_\theta(z_T), \frac{\partial \ln q_\theta(z_T)}{\partial z_T}$ \COMMENT{Return gradients}
\end{algorithmic}   
\end{algorithm*}

\begin{algorithm*}
% I never liked the latex algorithms package anyways....
\caption{Full path gradient computation}
\label{algo2}
\begin{algorithmic}
\STATE {\bfseries Input:} dynamics parameters $\theta$, start time $0$, stop time $1$
\STATE $z_0 \sim q_0$  \COMMENT{Sampling from base distribution} 
\STATE $g_0 = \frac{\partial \ln q_0(z_0)}{\partial z_0}$ \COMMENT{Getting initial gradients}
\STATE $z_T, \ln q_\theta(z_T), \frac{\partial \ln q_\theta(z_T)}{\partial z_T} = \textnormal{Forw-Aug}(\theta, 0, T, z_0, \frac{\partial \ln q_0(z_0)}{\partial z_0})$ \COMMENT{Applying Alg.~\ref{algo1}}
\STATE $\ln p(z_T), \frac{\partial \ln p(z_T)}{\partial z_T} = \textnormal{Compute log-prob and derivative}$ \COMMENT{Compute log-prob and derivative}
\STATE $\_,  \blacktriangledown_\theta \ln q_\theta(z_T) = \textnormal{Rev.-mode der.}(\theta, T, 0, z_T, \frac{\partial \ln q_\theta(z_T)}{\partial z_T}-\frac{\partial \ln p(z_T)}{\partial z_T})$ \COMMENT{Applying Alg.~1 from \citet{chen2018neural}}
\STATE {\bfseries Return:} $z_T, \ln q_\theta(z_T), \blacktriangledown_\theta \ln q_\theta(z_T)$ \COMMENT{Return gradients}
\end{algorithmic}   
\end{algorithm*}

\begin{figure*}[ht]
\begin{center}
\includegraphics[width=\textwidth]{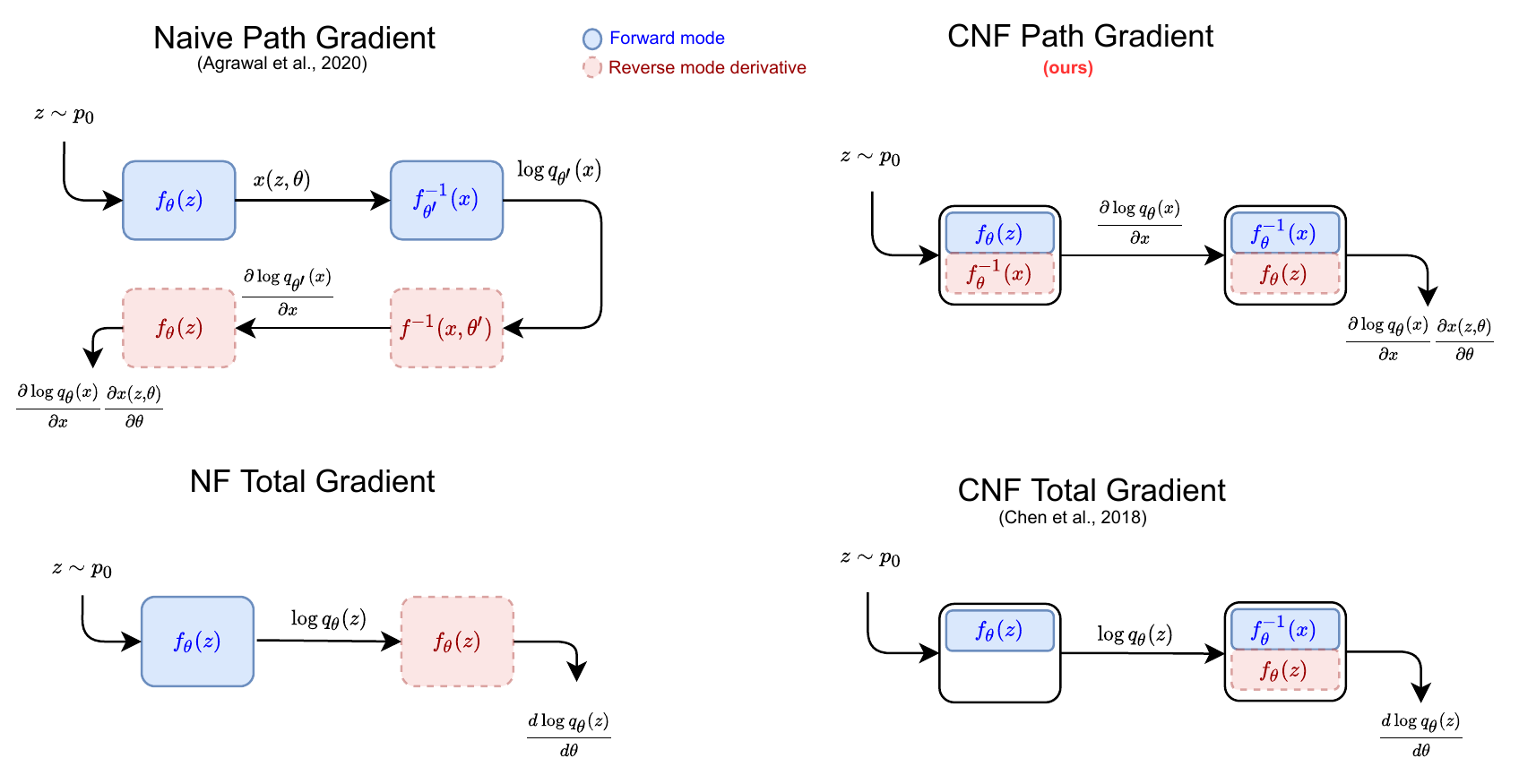}
\caption{Comparing the gradient computation for discrete as well as continuous normalizing flows}
\label{fig:full-algo-sketch}
\end{center}
\vskip -0.2in
\end{figure*}
\newpage

\section{Path gradient for CNFs}
\label{app:CNF-adjoint}

\begin{theorem}
The derivative $\tfrac{\partial \ln q_\theta(z_T)}{\partial z_T}$ can be obtained by solving the initial value problem
\begin{align}
    \frac{d}{dt}   \frac{\partial \ln  q_\theta(z_t)}{\partial z_t} = 	&-\frac{\partial \ln q_\theta(z_t)}{\partial z_t}^\intercal  \frac{\partial f_\theta(z_t,t)}{\partial z_t} \nonumber \\ & \hspace{1em} - 	 \partial_{z_t} \textrm{tr}\left(\frac{d f_\theta(z_t, t)}{d z_t} \right) \,, \label{eq:app-pathgradode}
\end{align}
with initial condition
\begin{align*}
    \frac{\partial \ln q_\theta(z_0)}{\partial z_0} =  \frac{\partial \ln q_Z(z_0)}{\partial z_0} \,.
\end{align*}
\end{theorem}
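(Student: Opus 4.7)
The plan is to derive the ODE Eulerian-style: first get a PDE for the field $\ln q_\theta(z,t)$ on all of $\mathcal{X}\times[0,T]$, differentiate it in $z$, and then convert the resulting PDE to an ODE along the characteristic curve $z_t$.

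Concretely, I would start from the instantaneous change-of-variables formula that underlies CNFs: the pushforward density $q_\theta(\cdot,t)$ of $q_Z$ under the flow of \eqref{eq:node} satisfies the logarithmic continuity equation
\begin{equation*}
    \partial_t \ln q_\theta(z,t) + f_\theta(z,t)^\intercal \nabla_z \ln q_\theta(z,t) + \operatorname{tr}\!\left(\tfrac{\partial f_\theta(z,t)}{\partial z}\right) = 0 ,
\end{equation*}
which one obtains by dividing the standard continuity equation $\partial_t q_\theta + \nabla\cdot(f_\theta q_\theta)=0$ by $q_\theta$. (Equivalently, this is the PDE whose characteristic form is the familiar identity $\tfrac{d}{dt}\ln q_\theta(z_t,t)=-\operatorname{tr}(\partial f_\theta/\partial z_t)$.)

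Next I would apply $\nabla_z$ to this PDE and denote $\phi(z,t):=\nabla_z\ln q_\theta(z,t)$. The only nontrivial piece is $\nabla_z\bigl(f_\theta^\intercal \phi\bigr)$, which expands to $(\partial f_\theta/\partial z)^\intercal\phi + (\partial \phi/\partial z)^\intercal f_\theta$; here I use the crucial fact that $\partial\phi/\partial z$ is the Hessian of $\ln q_\theta$ and is therefore symmetric, so $(\partial\phi/\partial z)^\intercal f_\theta=(f_\theta\cdot\nabla_z)\phi$. This gives the Eulerian evolution
\begin{equation*}
    \partial_t\phi + (\partial f_\theta/\partial z)^\intercal \phi + (f_\theta\cdot\nabla_z)\phi + \nabla_z \operatorname{tr}\!\left(\tfrac{\partial f_\theta}{\partial z}\right)=0.
\end{equation*}

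Finally, I convert to a Lagrangian (total) derivative along the trajectory $z_t$ of \eqref{eq:node}: by the chain rule, $\tfrac{d}{dt}\phi(z_t,t)=\partial_t\phi + (f_\theta\cdot\nabla_z)\phi$, so the advection term cancels exactly, leaving precisely \eqref{eq:app-pathgradode}. The initial condition is immediate: at $t=0$ the flow is the identity, so $\ln q_\theta(\cdot,0)\equiv\ln q_Z$ and hence $\phi(z_0,0)=\nabla_{z_0}\ln q_Z(z_0)$.

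The main obstacle is the cancellation that relies on the symmetry $\partial_i\partial_j\ln q_\theta=\partial_j\partial_i\ln q_\theta$; without it one would be left with an ill-defined expression involving the Hessian of the log-density. A secondary bookkeeping issue is being consistent with the Jacobian-versus-transpose convention used in \eqref{eq:app-pathgradode}, but once the symmetry argument is in place the rest of the derivation is routine. As an alternative sanity check, one can verify the same ODE by differentiating the closed-form identity $\ln q_\theta(z_t,t)=\ln q_Z(z_0)-\int_0^t\operatorname{tr}(\partial f_\theta/\partial z_s)\,ds$ with respect to $z_t$ via the sensitivity equation $\tfrac{d}{dt}(\partial z_t/\partial z_0)=(\partial f_\theta/\partial z_t)(\partial z_t/\partial z_0)$, but the continuity-equation route is cleaner and more transparent.
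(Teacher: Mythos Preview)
Your proof is correct but follows a genuinely different route from the paper's.

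The paper argues by a finite-difference expansion along the trajectory, in the style of the adjoint-state derivation of \citet{chen2018neural}: it sets $\alpha_t=\tfrac{\partial \ln q_\theta(z_t)}{\partial z_t}$, writes $\ln q_\theta(z_t)=\ln q_\theta(z_{t-\epsilon})+\int_{t-\epsilon}^{t}\partial_s\ln q_\theta(z_s)\,ds$, uses $z_{t-\epsilon}=z_t-\epsilon f_\theta+O(\epsilon^2)$ to compute $\tfrac{\partial z_{t-\epsilon}}{\partial z_t}$ to first order, and lets $\epsilon\to 0$. The continuity equation never appears explicitly, nor does the Hessian of $\ln q_\theta$.

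You instead work Eulerian-first: start from the logarithmic Liouville equation for the field $\ln q_\theta(z,t)$, take a spatial gradient, and then pass to the material derivative along $z_t$ so that the advection term $(f_\theta\!\cdot\!\nabla_z)\phi$ cancels. The Hessian symmetry you highlight is precisely what turns $(\partial\phi/\partial z)^\intercal f_\theta$ into the advective piece; in the paper's argument the same fact is hidden inside the chain rule $\alpha_{t-\epsilon}\tfrac{\partial z_{t-\epsilon}}{\partial z_t}$.

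What each buys: the paper's derivation is elementary and self-contained, requiring no PDE vocabulary and matching the NODE adjoint proof the intended readership already knows. Your derivation is more structural---it exhibits \eqref{eq:app-pathgradode} as the transport equation for the score with a divergence source term---and makes the needed regularity ($\ln q_\theta\in C^2$ so mixed partials commute) explicit rather than implicit. The alternative sensitivity-equation check you mention is sound as well and gives a third, equivalent route.
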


\begin{proof}
We define a generalized adjoint state by 
\begin{align}
    \alpha_t \equiv \frac{ d \ln q_\theta(z_t)}{d z_t}  \label{eq:gen-adjoint}  
\end{align}
Expanding the logarithm on the right-hand-side, we obtain
\begin{align*}
	 \alpha_t &= 	\frac{ d \left( \ln q_\theta(z_{t-\epsilon}) + 	\int_{t-\epsilon}^t\frac{\partial \ln q_\theta(z_{t})}{\partial t} \textrm{d} t \right)}{d z_{t}}  		  \\
 &= 	\frac{ d \ln q_\theta(z_{t-\epsilon}) }{d z_{t-\epsilon}}  		\frac{ d z_{t- \epsilon}}{d z_t}  + 	\frac{d \int_{t-\epsilon}^t\frac{\partial \ln q_\theta(z_{t})}{\partial t} \textrm{d} t}{d z_t}
 \end{align*}
Expanding the last summand up to linear order in $\epsilon$, we obtain 
 \begin{align*}
\alpha_t &= 	\frac{ d \ln q_\theta(z_{t-\epsilon}) }{d z_{t-\epsilon}}  		\frac{ d z_{t- \epsilon}}{d z_t} + 	\frac{d \frac{\partial \ln q_\theta(z_{t})}{\partial t}}{d z_t} \epsilon + \mathcal O(\epsilon^2)\\
 &= 	 \alpha_{t-\epsilon}  \frac{ d z_{t- \epsilon}}{d z_t} + \tilde \alpha_t \epsilon + \mathcal O(\epsilon^2) \, ,
  \end{align*}
 where we have used the definition of the generalized adjoint state $\alpha_t$ \eqref{eq:gen-adjoint} and have defined
\begin{align}
    	 \tilde \alpha_t = \frac{d\frac{\partial \ln q_\theta(z_{t})}{\partial t}}{d z_t} = - 	\frac{ \partial \textrm{tr}\left(\frac{df(t)}{d z_t} \right) }{\partial z_{t}} \label{eq:gen-adjoint-trace} \, .
\end{align}
Using $z_{t - \epsilon} = z_t - \epsilon f(z_t,t,\theta) + \mathcal O(\epsilon^2)$, we conclude
\begin{align*}
 \alpha_t &=  \alpha_{t - \epsilon} 	\frac{d }{d z_t}\Big(z_t - \epsilon 	f(z_t,t,\theta) \Big) + \tilde \alpha_t \epsilon + \mathcal O(\epsilon^2) \\ 
 &=  \alpha_{t - \epsilon} 	- \epsilon  \alpha_{t - \epsilon} \frac{\partial  f(z_t,t,\theta) }{\partial z_t} +  \epsilon \tilde \alpha_t + \mathcal O(\epsilon^2) \,.
  \end{align*}

The definition of the derivative then implies the following ODE
 \begin{align*}
	\frac{ d \alpha_t}{d t} &= \lim_{\epsilon \to 0} 	\frac{ \alpha_t - \alpha_{t - \epsilon}}{\epsilon} \\
 &= \lim_{\epsilon \to 0} 	\frac{- \epsilon 	 \alpha_{t - \epsilon} \frac{\partial f(z_t,t,\theta)}{\partial z_t}+ \tilde \alpha_t \epsilon  }{\epsilon} \\
&= - \alpha_t \frac{\partial f(z_t,t,\theta)}{\partial z_t} + \tilde \alpha_t  \, .
\end{align*}
From the definition of generalized adjoint state $\alpha_t$ \eqref{eq:gen-adjoint} and of $\tilde \alpha_t$ \eqref{eq:gen-adjoint-trace}, we arrive at the claim of the theorem 
\begin{align*}
\frac{d}{dt}   \frac{\partial \ln  q_\theta(z_t)}{\partial z_t} &= 	-\frac{ d \ln q_\theta(z_t)}{d z_t}   \frac{\partial f(z_t,t,\theta)}{\partial z_t} - 	\frac{ \partial \textrm{tr}\left(\frac{df(t)}{d z_t} \right) }{\partial z_{t}} \, .
\end{align*}

\end{proof}

By evolving the generalized adjoint state $\alpha_t$ from $t=0$ to $t=T$, we obtain the first term of the path gradient $\frac{\partial \ln q_\theta(x)}{\partial x} \frac{\partial x(z, \theta)}{\partial \theta}$. After computing this term in the forward pass, we can compute the whole path gradient as if the CNF were a standard Neural ODE, i.e. without taking the divergence term into account. 
The Hutchinson trace estimator can straightforwardly be applied to the trace independently of the adjoint state propagation.

%%%%%%%%%%%%%%%%%%%%%%%%%%%%%%%%%%%%%%%%%%%%%%%%%%%%%%%%%%%%%%%%%%%%%%%%%%%%%%%
%%%%%%%%%%%%%%%%%%%%%%%%%%%%%%%%%%%%%%%%%%%%%%%%%%%%%%%%%%%%%%%%%%%%%%%%%%%%%%%

\section{Experimental details}
\label{app:exp-details}
\subsection{Variational Autoencoder}
To reproduce the experiments of \citet{grathwohl2018scalable}, we implemented our proposed Algorithm~\ref{algo2} for the path gradient estimator  in their codebase. 

We train a VAE with an instance of Ffjord \citep{grathwohl2018scalable} between the en- and decoder.
As in the original experiments, the encoder network also outputs a low-rank update $\hat U(x) \hat V(x)^T $ to the  global weight matrix $W$ of the parameters of the flow as well as a bias vector $\hat b(x)$.
%Each layer in the neural network inside the Ffjord model then takes the form
%$$
%\textnormal{layer}(h;x,W,b) = \sigma \left( \left(W + \hat U(x) \hat V(x)^T \right) h + b + \hat b(x)  \right) \,.
%$$
Both VAE networks are seven-layer networks with the architecture in the experiments by \citet{vdberg2018sylvester}, using gated and transposed convolutions for the encoder and decoder respectively.
For the baseline total gradient estimators, we used the results as reported in \citet{grathwohl2018scalable}. For the largest datasets MNIST and Omniglot the hyper-parameters for the path gradient estimators, we searched close to the best hyper-parameters for the total gradient, since the full grid is quite expensive to search. 
\begin{itemize}
    \item MNIST: two subsequent CNFS each with  a two hidden layer neural network with softplus, width 1024 and weight matrix updates of rank 64. Patience 35
    \item Omniglot: Five subsequent CNFs with a two hidden layer network with softplus, 512 hidden dimensions and weight matrix update of rank 64. Patience 35

\end{itemize}
For Caltech and Freyfaces, we used gridsearch over all the configurations which  \citet{grathwohl2018scalable} also used for finding the best hyper-parameters.
\begin{itemize}
    \item Caltech: Two subsequent CNF with a two hidden layer network with tanh, width 2048 hidden dimensions and weight matrix update of rank 20. Patience was 100.
    \item Freyfaces: Two CNFs with two hidden layers with softplus, width 512 and weight matrix update of rank 20. Patience was 100.
\end{itemize}
Training was done with a learning rate of .001, the Adam optimizer~\citep{kingma2014adam}, batch-size 100. The ODE solver was Dopri5. Training was done with a single GPU (A100).
  
While a single epoch generally took longer to train when using the path gradient estimator, the faster convergence (in epochs) meant that the training generally was comparable to training with the total gradient estimator.
Due to the adaptive step size of the ODE solver, the duration of one forward step of the CNF varied from run to run.
During the experiments, the addition of path gradient to the augmented dynamics did not have a marked effect on the number of function evaluations (NFE).
For different random seeds for the same hyper-parameters, the path gradient runs the NFE fluctuated. This sometimes led to the effect, that an epoch with the path gradient estimator was quicker than with the total gradient. Due to the increased computation time per function evaluation however, the total gradient mostly used less wall-time per epoch. 

Table~\ref{tab:loglike} shows the negative test log likelihood for the path and total gradient estimator. 

 \begin{table}[t]
 \caption{Negative test log likelihood (nll) for path and total gradient estimator. \textbf{Lower is better.} Per image, we used 5000 importance samples (2000 for Caltech) to estimate the nll.}
  \label{tab:loglike}
\vskip 0.15in
\begin{center}
\begin{small}
\begin{sc}
\begin{tabular}{@{}l|rr}%\toprule
   Dataset  & Path & Total\\
  \toprule
        MNIST & 79.87 $\pm$ .14 & 80.10 $\pm$ .13 \\
        Omniglot & \textbf{92.64} $\pm$ .13 & 93.43 $\pm$ .06 \\
        Caltech Silhouettes & \textbf{91.65} $\pm$ .01 & 93.06 $\pm$ .02  \\
        Frey Faces (nll bpd) & 4.28 $\pm$ .03 & 4.36 $\pm$ .06 
    \end{tabular}
\end{sc}
\end{small}
\end{center}
\vskip -0.1in
\end{table}
\subsection{Lattice Field Theory}
For the physics application we mirrored the experiments of \citet{haan2021scaling}.
We only looked at lattice sizes where the effective sampling size could be improved, i.e. $L \in \{12,20,32\}$.
For training we chose a batch size of 5000 with a learning rate of 0.0005 and the ADAM optimizer~\citep{kingma2014adam}. 
The ODE solver was Runge-Kutta 4 with 50 steps.\\
Using these hyperparameters we were able to reproduce and even improve the baseline, which was optimized using the total gradient estimator.
Since the usage of the path gradient estimator increases the runtime, we let the baseline run for 20$\%$ more epochs than the path gradient estimator. In practice the quadratic complexity $\mathcal O(d^2)$ of the equivariant flow proposed by \citet{haan2021scaling}, the difference in computation time decreases with increasing dimensionality of the lattice. \\
For $L \in \{12,20\}$ the baseline was run for 24k batches and the path gradient for 20k batches. For $L = 32$, since the models seem to have converged by then, we only trained for  15.4k and 12k respectively.
Optimization for $L=32$ showed some instable behavior for the baseline. For the reported results for the total gradient estimator, we only took runs into account, that did not diverge.
Training was done with a single GPU (A100).

Since the energy of the LQFT's target density is known in closed-form (i.e. is not learnt from data), one can also use high-precision tools of theoretical physics, see \cite{nicoli2021estimation}, to estimate the reverse KL divergence directly. Analogously to the effective sampling size, the reverse KL divergence is consistently lower for the models trained with path-gradients, see Table~\ref{tab:phi4KL}. 

\begin{table}[t]
  \caption{Reverse KL divergence estimated as described in \cite{nicoli2021estimation} using 4M samples. {\bf Lower is better.}}
  \label{tab:phi4KL}
\vskip 0.15in
\begin{center}
\begin{small}
\begin{sc}
 \begin{tabular}{@{}l|rr} 
        Lattice size & Path & Total  \\
         \toprule
        12x12 &  \textbf{.0016} $\pm$ .0005 &  .0113 $\pm$ .0027\\
        20x20 &  \textbf{.0123} $\pm$ .0008 &  .0415 $\pm$ .0087\\
        32x32 &  \textbf{.0498} $\pm$ .0049 & .1833 $\pm$ .0643\\
    \end{tabular}
\end{sc}
\end{small}
\end{center}
\vskip -0.1in
\end{table}

\section{Runtime Analysis}
In order to analyse the additional computational cost of computing the path gradients as described in Alg.~\ref{algo2}, we tested the increase in runtime over different number of dimensions.
We fixed the parameters of the hidden layers in a vanilla CNF and recorded the time that each "ODESolve" call took.
This ignores e.g. the additional runtime, that duplicating the model yields in the naive path gradient estimator.
Figure~\ref{fig:runtime} shows the increase in runtime analysed with 24 repetitions. We can see that it yields an increase of roughly $ 14\%$.
In practice the complexity of the model increases with the number of dimensions of the sample. Furthermore there are other operations that also scale with the problem complexity, so generally we can expect a lesser impact on the runtime than shown in Figure~\ref{fig:runtime}.

\end{document}